\documentclass{article}
\pdfoutput=1
\usepackage[hide]{ed}  

\usepackage{times}
\usepackage{graphicx}
\usepackage{latexsym}
\usepackage{url}

\usepackage{graphicx}
\usepackage{subfigure}
\usepackage{flushend}
\usepackage{epsfig}
\usepackage{amssymb}
\usepackage{amsmath}

\usepackage{algorithm}
\usepackage{algorithmic}


\newtheorem{theorem}{Theorem}

\newtheorem{proposition}[theorem]{Proposition}

\newcommand{\eat}[1]{}
\newcommand{\OPRA}{{\cal OPRA}}
\newcommand{\opra}[3]{{\scriptscriptstyle #1}\angle_{#2}^{#3}}
\newcommand{\opras}[2]{{\scriptscriptstyle #1}\angle{#2}}
\newcommand{\false}{\mathit{false}}
\renewcommand{\mod}{\,\mathrm{mod}\,}


\begin{document}
\newenvironment{ProofSketch}{\textit{Proof Sketch.}}{\hspace*{\fill}$\Box$}
\newenvironment{proof}{\textit{Proof.}}{\hspace*{\fill}$\Box$}


\title{Qualitative Reasoning about Relative Direction
\\
on Adjustable Levels of Granularity }

\author{$\mbox{Till Mossakowski}^{1}$ and $\mbox{Reinhard Moratz}^2$\\[0.3cm]
$^1\mbox{University of Bremen,}$\\
Collaborative Research Center on Spatial Cognition (SFB/TR 8),\\
Department of Mathematics and Computer Science,\\
and DFKI GmbH,
Enrique-Schmidt-Str. 5, 28359 Bremen, Germany.\\
{\tt Till.Mossakowski@dfki.de}\\[0.3cm]
$^2\mbox{University of Maine,}$\\
Department of Spatial Information Science and Engineering,\\
348 Boardman Hall, Orono, 04469 Maine, USA.\\
{\tt moratz@spatial.maine.edu}\\[0.3cm]
}

\date{}
\maketitle

\begin{abstract}
\begin{sloppy}
  An important issue in Qualitative Spatial Reasoning is the
  representation of relative direction.  In this paper we present
  simple geometric rules that enable reasoning about relative
  direction between oriented points.  This framework, the Oriented
  Point Algebra $\OPRA_m$, has a scalable granularity $m$.  We develop
  a simple algorithm for computing the $\OPRA_m$ composition tables and
  prove its correctness.  Using a composition table, algebraic closure
  for a set of $\OPRA$ statements is sufficient to solve spatial
  navigation tasks.  And it turns out that scalable granularity is
  useful in these navigation tasks.
\end{sloppy}
\end{abstract}
\vspace*{0.5cm}{\bf Keywords:}\\
Qualitative Spatial Reasoning, Constraint-based Reasoning,  Qualitative Simulation 


\section{Introduction}

\ednote{TM: red plot/motivation. Simple direct computation of
composition table}
\noindent The concept of {\it qualitative space} can be characterized by the following quotation
from Galton \cite{Galton00_QualSpatChange}:
\begin{quotation}
\noindent The divisions of qualitative space correspond to salient discontinuities in
our apprehension of quantitative space.
\end{quotation}
If qualitative spatial divisions serve as knowledge representation in a reasoning system
deductive inferences can be realized as constraint-based reasoning \cite{Renz2007}.
An important issue in such Qualitative Spatial Reasoning systems is the representation of 
relative direction\cite{cosyFRE92}, \cite{hernandez_aij}.
Qualitative spatial constraint calculi typically store their spatial knowledge in
a composition table \cite{Renz2007}.
For a recent overview about Qualitative Spatial Reasoning (QSR) we refer to
Renz and Nebel \cite{Renz2007}.

A new qualitative spatial reasoning calculus about relative direction,
the Oriented Point Algebra $\OPRA_m$,
which has a scalable granularity with parameter $m\in \mathbb{N}$ was presented in \cite{Moratz06_ECAI}.
The motivation for this scalable granularity was that
representing relatively fine
distinctions was expected to be useful in more complex navigation tasks. 
It turned out to be difficult to analyze the reasoning rules for this calculus:
The algorithm presented in the
original paper \cite{Moratz06_ECAI} contained many gaps and errors. The algorithm
presented in \cite{FrommbergerEtAl07} is quite lengthy and cumbersome.
\ednote{I do not understand the following argument: ``is too lenghty to be used for
proofs about existence of realizations of abstract configuration/correctness proofs.''}

The paper is organized as follows:
we will first give a short overview about $\OPRA_m$ calculus.
We start this with a definition for a coarse type ($ m=2 $),
followed by the model for arbitrary $ m \in \mathbb{N}$.
Then we will present a new compact algorithm 
which to performs $\OPRA_m$ reasoning based on simple geometric rules,
and prove its correctness.
At the end we give an overview about several application that use the $ \OPRA_m $ calculus for 
spatial navigation simulations and discuss the adequateness of specific
choices for the granularity parameter $m$.

\section{The oriented point algebra}\label{sec:OPRA}

Objects and locations can be represented as simple, featureless points.
In contrast, the $ \OPRA_m $ calculus
uses more complex basic entities:
It is based on objects which are represented as oriented points.  It
is related to a calculus which is based on straight line segments
(dipoles) \cite{Moratz00_QSRwithLineSegs}.  Conceptually, the oriented points 
can be viewed as a transition from oriented line segments
with concrete length to line segments with infinitely small length
\cite{MoratzEtAl2010}.
In this conceptualization the length of the objects no longer has any
importance.  Thus, only the orientation of the objects is modeled.  {\em
  O-points}, our term for oriented points, are specified as pair of a
point and a orientation on the 2D-plane.

\begin{figure}[htb]
\begin{center}
\includegraphics[width=4.5cm]{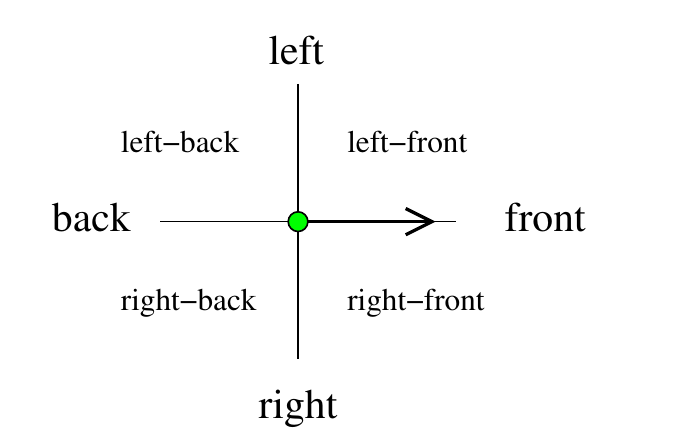}
\caption{\label{Regions} An oriented point and its qualitative spatial relative directions}
\end{center}
\end{figure}

\subsection{Qualitative O-Point Relations and Reasoning\label{basic}}

\noindent In a coarse representation a single o-point induces the
sectors depicted in \nolinebreak{figure \ref{Regions}}.  
``front'', 
``back'',
``left'', and ``right'' 
are linear sectors. 
``left-front'', ``right-front'', 
``left-back'', and ``right-back''
are quadrants.
The position of the point itself is denoted as ``same''.  
This qualitative granularity corresponds to Freksa's double\ednote{single cross?} cross calculus
\cite{freksa92b,ScivosN-a:04-finest}.

A qualitative spatial relative direction relation between two o-points
is represented by two pieces of information:
\begin{itemize}
\item the sector (seen from the first o-point) in which the second o-point lies (this
determines the lower part of the relation symbol), and
\item the sector (seen from the second o-point) in which the first o-point lies
(this determines the upper part of the relation symbol).
\end{itemize}
For the general case of the two points having different positions we use 
the following relation symbols:\\[-0.3cm]

${}_{\rm front}^{\rm front}$,
${}_{\rm front}^{\rm lf}$,
${}_{\rm front}^{\rm left}$,
${}_{\rm front}^{\rm lb}$,
${}_{\rm front}^{\rm back}$,
${}_{\rm front}^{\rm rb}$,
${}_{\rm front}^{\rm right}$,
${}_{\rm front}^{\rm rf}$,
${}_{\rm lf}^{\rm front}$,
${}_{\rm lf}^{\rm lf}$,
$\ldots$,
${}_{\rm rf}^{\rm rf}$.\\[-0.3cm]

\noindent 
Altogether we obtain $8 \times 8$ base relations for the two points having 
different positions.

Then the configuration shown in
\nolinebreak{figure \ref{Configuration}}
is expressed with the relation $A \; {}_{\rm rf}^{\rm lf} \; B$. If both points share the same
position, the lower relation symbol part is the word ``same'' and the upper part denotes
the direction of the second o-point with respect to the first one as shown in figure \ref{Configuration2}.

\begin{figure}[htb]
\begin{center}
\includegraphics[width=3.5cm]{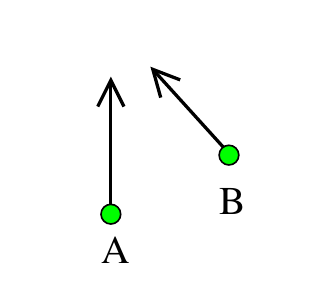}
\caption{\label{Configuration} Qualitative spatial relation between two oriented points at different positions.
The qualitative spatial relation depicted here is $A \; {}_{\rm rf}^{\rm lf} \; B$.}
\end{center}
\end{figure}

\noindent Altogether we obtain 72 different atomic
relations (eight times eight general relations plus eight with the
o-points at the same position). These relations are jointly exhaustive
and pairwise disjoint (JEPD).  The relation ${}_{\rm same}^{\rm front}$ is the
identity relation.  

\begin{figure}[htb]
\vspace*{-0.4cm}
\begin{center}
\includegraphics[width=2.8cm]{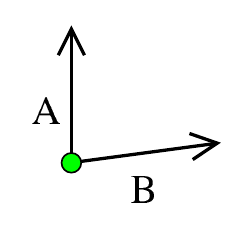}
\vspace*{-0.1cm}\caption{\label{Configuration2} Qualitative spatial relation between two
  oriented points located at the same position.
The qualitative spatial relation depicted here is $A \; {}_{\rm same}^{\rm rf} \; B$.}
\end{center}
\end{figure}

In order to apply constraint-based reasoning
to a set of qualitative spatial relations, the relations ideally should
form a relation algebra \cite{Ladkin94_RelAlg} or a non-associative
algebra \cite{Maddux2006,ligozatR-a:04-what}.  Such an algebra can be
generated from a jointly exhaustive and pairwise disjoint set of base
relations by forming the power set, giving the general relations,
with bottom, top, intersection, union and complement of relations
defined in the set-theoretic way.  Moreover, an identity base relation
and a converse operation ($\smile$) on base relations must be
provided; the latter naturally extends to general relations.  Finally,
if composition of base relations cannot be expressed using general
relations (strong composition), this operation
is approximated by a \emph{weak composition} \cite{RenzLigozat}:
$$b_1\diamond b_2=\{b \mid (R_{b_1}\circ R_{b_2})\cap R_b \not= \emptyset\}$$
where $R_{b_1}\circ R_{b_2}$ is the usual set theoretic composition
$$R\circ S = \{(x,z) | \exists y\,.\,(x,y)\in R, (y,z)\in S\}$$
and $R_b$ is the set-theoretic relation corresponding to the
abstract base relation $b$.
For details we refer to \cite{ligozatR-a:04-what}.

The composition of relations must be computed based on the semantics
of the relations. The compositions are usually computed only for the
atomic relations; this information is stored in a composition table.
The composition of compound relations can be obtained as the union of
the compositions of the corresponding atomic relations.  The
compositions of the atomic relations can be deduced directly from the
geometric semantics of the relations (see section
\ref{OPRACompTable}).

O-point constraints are written as $xRy$ where $x,y$ are variables for
o-points and $R$ is a $\OPRA$ relation.
Given a set $\Theta$ of o-point constraints, an important reasoning
problem is deciding whether $\Theta$ is {\em consistent}, i.e.,
whether there is an assignment of all variables of $\Theta$ with
dipoles such that all constraints are satisfied (a {\em solution}).
A partial method for determining inconsistency of a set of constraints
$\Theta$ is the
{\em path-consistency method} \cite{mackworth77}, which 
computes the algebraic closure on
$\Theta$.
This method applies
the following operation
until
a fixed point is reached:
\begin{displaymath}
\forall i, j, k:  \quad  R_{ij}  \leftarrow R_{ij} \cap (R_{ik} \diamond R_{kj})
\end{displaymath}
where $i,j,k$ are nodes and $R_{ij}$ is the relation between $i$ and $j$.
The resulting set of constraints is equivalent to the original set, i.e.\ it has the same set
of solutions. If the empty relation occurs while performing this operation, $\Theta$
is inconsistent, otherwise the resulting set algebraically closed\footnote{which means that it is path-consistent in the case that the algebra
has a strong composition} \cite{RenzLigozat}. 
Note that algebraic closure not always implies consistency, and
indeed, 
\cite{FrommbergerEtAl07} show
that this implication
does not hold for the
$\OPRA$ calculus.
Indeed, consistency in $\OPRA$ has been shown to be NP-hard
even for scenarios in base relations \cite{LeeWolterAIJ}, while algebraic closure
is a polynomial approximation of consistency.

\subsection{Finer Grained O-Point Calculi}

\noindent The design principle for
the coarse $\OPRA$ calculus described above
can be generalized to calculi
$\OPRA_{m}$ with arbitrary $ m \in \mathbb{N} $.
Then an angular resolution of
$\frac{2\pi}{2m}$
is used for the representation
(a similar scheme for absolute direction instead of relative direction was
designed by Renz and Mitra
\cite{Renz04_QDCArbGranu}).
The granularity used for the introduction of the $\OPRA$ calculus in the previous
section is $m = 2$, the corresponding $\OPRA$ version is then called $\OPRA_{2}$.

To formally specify the o-point relations we use two-dimensional
continuous space, in particular ${\mathbb{R}}^2$.  Every o-point $S$
on the plane is an ordered pair of a point ${\bf p}_S$ represented by
its Cartesian coordinates $x$ and $y$, with $x, y \in {\mathbb{R}}$
and an orientation $\phi_S$.

\begin{displaymath}
S = \left( {\bf p}_S, \phi_S \right) ,   \qquad
{\bf p}_S  = \left( ({\bf p}_S)_x , ({\bf p}_S)_y \right)
\end{displaymath}

We distinguish the relative locations and directions of the two
o-points $A$ and $B$ expressed by a calculus $\OPRA_{m}$ according to
the following scheme.  For $A$, $B$ with ${\bf p}_A\not={\bf p}_B$, we define
\[
\varphi_{AB} := \mathit{atan2}(({\bf p}_B)_y - ({\bf p}_A)_y, ({\bf p}_B)_x - ({\bf p}_A)_x)
\]
where $\mathit{atan2}(y, x)$ is the angle between the positive
$x$-axis and the point $(x, y)$, normalised to the interval $]-\pi, 
\pi]$. By the properties of $\mathit{atan2}$, we get
\[
\varphi_{BA} = \varphi_{AB}+\pi
\]
modulo normalization to $]-\pi, \pi]$.
 In the sequel, we will normalize all angles to this interval,
reflecting the cyclic order of the directions.
Hence, e.g.\ $-\pi$ stands for $\pi$. Moreover, in case
that $\alpha>\beta$, the open interval $]\alpha,\beta[$ will stand for
$]\alpha,\pi]\,\cup\,]-\pi,\beta[$. For example,
$]\frac{\pi}{2},-\frac{\pi}{2}[$ stands for
$]\frac{\pi}{2},\pi]\,\cup\,]-\pi,-\frac{\pi}{2}[$.

Similarly, we enumerate directions by using the $4m$ elements of the
cyclic group ${\cal Z}_{4m}$.  Each element of the cyclic group is
interpreted as a range of angles as follows:
\[
[i]_m = \left\{\begin{array}{ll}
{}] 2 \pi \frac{i - 1}{4 m}, 2 \pi \frac{i+1}{4 m}[, & \mbox{ if $i$ is odd}\\
\{ 2 \pi \frac{i}{4 m}\},  & \mbox{ if $i$ is even}
\end{array}\right.
\]
Conversely, for each angle $\alpha$, there is a unique element
$i\in{\cal Z}_{4m}$ with $\alpha\in[i]_m$.

\noindent 
If ${\bf p}_A \not= {\bf p}_B$, the relation $A \; {\scriptscriptstyle
  m}\angle_{i}^{j} \; B$ ($i,j \in {\cal Z}_{4m}$) reads like this: Given a
granularity $m$, the relative position of B with respect to A is described by
$i$ and the relative position of A with respect to B is described by $j$.
Formally, it
represents the  set of configurations satisfying
\begin{eqnarray*}
\label{angulardefinition}
&
\varphi_{AB}
- \phi_A \in [i]_m
\mbox{ ~~and~~ } 
\varphi_{BA}
- \phi_B \in[j]_m.
 \nonumber
\end{eqnarray*}

\begin{figure}[htb]
\vspace{-0.4cm}\begin{center}
\includegraphics[width=6.0cm]{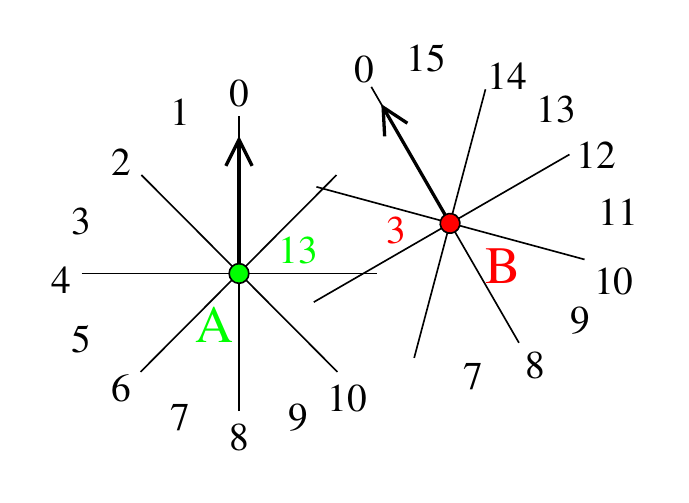}
\vspace{-0.4cm}\caption{\label{fig:OPRA4_Example}
Two o-points in relation
$A \; {\scriptscriptstyle 4}\angle_{13}^{3} \; B$
}
\end{center}
\end{figure}

\noindent 
Figure \ref{fig:OPRA4_Example} shows the resulting granularity
for $m=4$.
Using this notation, a simple manipulation of the parameters
yields
the converse operation
$({\scriptscriptstyle m}\angle^{i}_{j})^{\smile} = {\scriptscriptstyle m}\angle^{j}_{i}$
\hspace*{0.1cm}.

If ${\bf p}_A = {\bf p}_B$, the relation
$A \; {\scriptscriptstyle m}\angle{i} \; B$
represents the set of configurations satisfying
\begin{eqnarray*}
&
\phi_B - \phi_A \in [i]_m.
 \nonumber
\end{eqnarray*}

Hence the relation for two identical o-points $ A=B $ for arbitrary $m \in \mathbb{N} $
is $A{\scriptscriptstyle m}\angle{0}B$.
\noindent Using this notation a simple manipulation of the parameters
yields
the converse operation
$({\scriptscriptstyle m}\angle{i})^{\smile} = {\scriptscriptstyle m}\angle{(4m - i)}$.
The composition tables
for the atomic relations
of the $\OPRA_{m}$ calculi
can be computed using a small set of 
 simple formulas detailed in the following subsection.

It should be mentioned that the passage from $\OPRA_{1}$
to $\OPRA_{m}$ ($m\geq 2$) is a qualitative jump:
while $\OPRA_{1}$ relations are preserved by all orientation-preserving
affine bijections, for $m\geq 2$, $\OPRA_{m}$ relations
are only preserved by all angle-preserving
affine bijections, see \cite{MoratzEtAl2010}.


%
%

\begin{proposition}
Composition in $\OPRA$ is weak.
\end{proposition}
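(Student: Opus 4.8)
The plan is to show that the strong (set-theoretic) composition of two base relations is in general \emph{not} a union of base relations, so that it cannot be expressed inside the algebra and must be approximated by $\diamond$ — which is exactly what it means for composition in $\OPRA$ to be weak. Recall that composition is \emph{strong} iff for all base relations $b_1,b_2$ we have $R_{b_1}\circ R_{b_2}=\bigcup_{b\in b_1\diamond b_2}R_b$, and that $\diamond$ is always an over-approximation, i.e.\ $R_{b_1}\circ R_{b_2}\subseteq\bigcup_{b\in b_1\diamond b_2}R_b$. Hence it suffices to exhibit one pair $b_1,b_2$ together with a base relation $b$ such that $b\in b_1\diamond b_2$ (nonempty intersection), but $R_b\not\subseteq R_{b_1}\circ R_{b_2}$; that is, some configuration realizing $b$ admits no intermediate o-point compatible with both $b_1$ and $b_2$.

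First I would fix coordinates. Using invariance under orientation-preserving similarities I place $A$ at the origin with $\phi_A=0$; a base relation between two o-points at different positions is then described completely by the pair of angles $u=\varphi_{AC}-\phi_A$ and $v=\varphi_{CA}-\phi_C$, and by construction each base relation is precisely a product cell $[i]_m\times[j]_m$, an \emph{axis-aligned rectangle} (product of two arcs) in the $(u,v)$-torus. The key observation is that $R_{b_1}\circ R_{b_2}$, read in these same coordinates, is the image of the map sending the free parameters of the intermediate o-point $B$ (its direction and distance from $A$, and the corresponding data for $C$) to $(u,v)$. This map is real-analytic and genuinely nonlinear, so its image is a region whose boundary contains strictly curved arcs rather than purely horizontal and vertical segments.

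The heart of the argument is then a mismatch of shapes. I would pick $b_1$ with even indices, e.g.\ $b_1={}_{\rm front}^{\rm front}$, so that $B$ is forced onto a ray of $A$ and the intermediate configuration collapses to a one-parameter family; composing with a generic $b_2$ (both indices odd) the achievable set of $(u,v)$ is a two-dimensional region bounded by an explicitly computable, strictly curved arc, obtained by letting the free distance ratio vary. Because the base-relation cells tile the torus with straight grid lines, a strictly curved boundary arc cannot be a union of cell edges and must therefore pass through the interior of some cell $b=[i]_m\times[j]_m$. Near such a boundary point the cell is split: the part on the region side yields $b\in b_1\diamond b_2$, while the part on the far side of the curve lies in $R_b$ but not in the image. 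Hence $R_b\not\subseteq R_{b_1}\circ R_{b_2}$, strong composition fails, and $\diamond$ is genuinely only a weak composition.

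The main obstacle is converting this shape heuristic into a verified counterexample: one must pin down a single explicit pair $b_1,b_2$, compute the boundary curve of $R_{b_1}\circ R_{b_2}$ in the $(u,v)$-coordinates, and check that this curve really enters the interior of a cell (leaving part of it unrealized) rather than, by some coincidence of the chosen relations, running only along cell boundaries. Equivalently, and probably cleaner for the write-up, one fixes one concrete triple of o-points realizing the target base relation $b$ and proves directly from the angle constraints $\varphi_{AB}-\phi_A\in[i]_m$, $\varphi_{BC}-\phi_B\in[k]_m$, etc., that no choice of intermediate o-point $B$ can satisfy $b_1$ and $b_2$ simultaneously.
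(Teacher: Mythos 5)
Your opening reduction is correct and is exactly the skeleton of the paper's own proof: since the base relations are JEPD, $\diamond$ always over-approximates $\circ$, so composition fails to be strong as soon as one exhibits $b_1,b_2,b$ with $b\in b_1\diamond b_2$ but $R_b\not\subseteq R_{b_1}\circ R_{b_2}$. The problem is the ``heart of the argument'' you build on top of it. Your key claim --- that because the map from the parameters of the intermediate o-point $B$ to $(u,v)$ is real-analytic and nonlinear, the image $R_{b_1}\circ R_{b_2}$ must be bounded by \emph{strictly curved} arcs --- is false, and the inference itself is invalid (a nonlinear analytic map can perfectly well have an image bounded by straight segments). In this calculus everything is in fact piecewise \emph{linear} in angle coordinates: parametrize the position of $B$ by the two bearings $\theta_A=\varphi_{AB}$ and $\theta_C=\varphi_{CB}$; then the constraints coming from $b_1$ and $b_2$ are interval conditions on $\theta_A-\phi_A$ and $\theta_C-\phi_C$, the free orientation $\phi_B$ eliminates to an interval condition on $\theta_A-\theta_C$, and the requirement that the two rays meet is a linear condition on $(\theta_A,\theta_C)$. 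Projecting out $(\theta_A,\theta_C)$ gives a finite union of polygons in the $(u,v)$-torus whose edges have slopes $0$, $\infty$ and $\pm 1$ only --- no curves. (This linearity is precisely why the paper's Algorithm~1 can decide all composition entries by integer arithmetic on indices via $turn_m$ and $triangle_m$.) Even for your own example $b_1={}_{\rm front}^{\rm front}$: placing ${\bf p}_A$ at the origin and ${\bf p}_C$ at $(1,0)$, with $B$ above the line, the reachable set is $\{(u,v) : [i_2]_m\cap([j_2]_m+\pi+u-v)\,\cap\,]0,\pi+u[\,\neq\emptyset\}$, which is bounded by straight lines $u=\mathrm{const}$ and $v-u=\mathrm{const}$. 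What actually splits cells (and what your argument would need to identify and prove) are these \emph{diagonal} slope-$\pm1$ edges arising from the angle-sum constraint, not curvature; as written, your main step asserts something untrue and proves nothing.

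Your closing remark --- fix one concrete configuration realizing $b$ and show directly that no intermediate $B$ can satisfy $b_1$ and $b_2$ --- is exactly the paper's proof, but you explicitly leave it as ``the main obstacle'' rather than carrying it out, so the proposal contains no completed argument. For the record, the paper's instance is: the triple $A\opra{1}{0}{0}B$, $B\opra{1}{1}{2}C$, $A\opra{1}{3}{3}C$ is realizable, hence $\opra{1}{3}{3}\in\opra{1}{0}{0}\diamond\opra{1}{1}{2}$; yet for a suitable concrete pair $(A,C)$ with $A\opra{1}{3}{3}C$ there is no witness $B$, because $A\opra{1}{0}{0}B$ pins $B$ onto $A$'s front ray with the two o-points facing each other, while the upper index $2$ in $B\opra{1}{1}{2}C$ forces $B$ to lie on the ray directly behind $C$, and for that choice of $(A,C)$ these requirements are incompatible. (The paper then generalizes to $\OPRA_m$ via $A\opra{m}{0}{0}B$, $B\opra{m}{1}{2m}C$, $A\opra{m}{4m-1}{4m-1}C$.) To salvage your write-up you would need to do exactly this: name the triple, verify its realizability, and verify the non-existence of $B$ for one explicit $(A,C)$ --- the two checks your proposal postpones are the entire content of the proposition.
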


\begin{proof}
  The configuration $A\opra{1}{0}{0}B$, $B\opra{1}{1}{2}C$ and
  $A\opra{1}{3}{3}C$ is realizable. However, given $A$ and $C$ as in
  Fig.~\ref{fig:OPRA_weak}, we have $A\opra{1}{3}{3}C$, but we cannot
  find $B$ with $A\opra{1}{0}{0}B$ and $B\opra{1}{1}{2}C$: by
  $A\opra{1}{0}{0}B$, $B$'s carrier line is the same as $A$'s, and the
  two o-points face each other. But then, $B\opra{1}{1}{2}C$ is not
  possible, since $B$ would have to be located in the back of $C$.\ednote{RM: introduce in Fig. 5
  the direction of $C$ making $B\opra{1}{1}{2}C$ possible (i.e. $B$ is
  in the back of $C$) with a dashed line}

The argument easily generalizes to $\OPRA_m$ by considering
$A\opra{m}{0}{0}B$, $B\opra{m}{1}{2m}C$ and
$A\opra{m}{4m-1}{4m-1}C$.
\begin{figure}[htb]
\begin{center}
\includegraphics{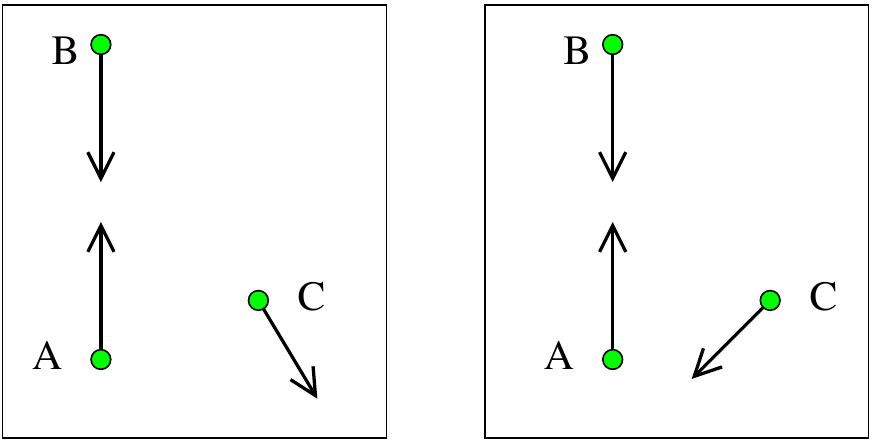}
\caption{\label{fig:OPRA_weak}
Composition in $\OPRA$ is weak}
\end{center}
\end{figure}
\end{proof}
\ednote{Adapt Fig.~\ref{fig:OPRA_weak} to the style of the others. RM: Is this figure correct?
see email.}

\subsection{Simple geometric rules for reasoning in $\OPRA_{m}$}
\label{OPRACompTable}

The composition table can be viewed as a list (set) of all relation
triples $A r_{ab} B$, $B r_{bc} C $, $C r_{ca} A$ for which $r_{ab},
r_{bc}$, and $r_{ca}$ are consistent ($A$, $B$, and $C$ being
arbitrary o-points on the $\mathbb{R}^2$ plane).  In the literature,
there are two algorithms for computing the composition table:
\cite{DBLP:conf/ecai/Moratz06} presents a fairly simple algorithm,
which, however, is error-prone, and \cite{FrommbergerEtAl07} provide a
correct algorithm, which however is based on a complicated case
distinction with dozens of cases (the paper is 29 pages long, 22 of
which are devoted to the algorithm and its correctness!).

We give an algorithm that is both correct and simpler than the two
existing algorithms.

The first ingredient of the algorithm is a detection of complete turns.
We define
\[
turn_m(i,j,k) \mbox{ iff } i+j+k \in  \left\{ 
\begin{array}{ll}
\{-1, 0, 1\}, &\mbox{ if both $i$ and $j$ are odd}\\
\{0\}, & \mbox{ otherwise }
\end{array}\right.
\]

This definition determines complete turns in the following sense:
\begin{proposition}\label{turn}
\begin{enumerate}
\item
$turn_m(i,j,k)\mbox{ iff }~\exists \alpha\in[i]_m, \beta\in[j]_m, \gamma\in[k]_m\,. \, \alpha+\beta+\gamma=0$
\item
$turn_m(i,j,k)$ implies that for any choice of one of the three angles 
in its interval, a suitable choice for the other two exists such that
all three add up to $0$.
\end{enumerate}
Recall that angles are normalized into $]-\pi,\pi]$.
\end{proposition}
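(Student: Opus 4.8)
The plan is to rescale every angle to units of the elementary angle $\theta := \frac{2\pi}{4m} = \frac{\pi}{2m}$ and to work throughout in the circle group, so that ``$\alpha+\beta+\gamma=0$'' is read modulo $2\pi$; this reading is forced by the cyclic convention, since interpreting it as an identity in $\mathbb{R}$ would contradict the very definition of $turn_m$ already for $\OPRA_1$ (e.g.\ it would make $turn_1(1,1,2)$ false). After dividing by $\theta$, the set $[i]_m$ becomes, modulo $4m$, the single residue $\{i\}$ when $i$ is even and the open interval $]i-1,i+1[$ when $i$ is odd; call this rescaled set $\overline{[i]}$. The condition $\alpha+\beta+\gamma\equiv 0\pmod{2\pi}$ then turns into $a+b+c\equiv 0\pmod{4m}$ for reals $a\in\overline{[i]}$, $b\in\overline{[j]}$, $c\in\overline{[k]}$, so Part~1 reduces to a purely arithmetic solvability question in $\mathbb{R}/4m\mathbb{Z}$.

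First I would settle this solvability. Writing $a=i+s$, $b=j+t$, $c=k+u$, each of $s,t,u$ is $0$ for an even index and ranges over $]{-1},1[$ for an odd one; hence, if $p$ denotes the number of odd indices among $i,j,k$, the attainable values of $s+t+u$ form exactly $\{0\}$ when $p=0$ and the open interval $]{-p},p[$ when $p\geq 1$. Consequently a solution exists iff some multiple of $4m$ lies within this attainable offset of the integer $i+j+k$, i.e.\ iff the distance from $i+j+k$ to the \emph{nearest} multiple of $4m$ is less than $p$ (with $p=0$ demanding exact divisibility). Since $p\leq 3$, I would read this off case by case in $p\in\{0,1,2,3\}$, using that $i+j+k$ has the same parity as $p$: for $p$ even the only reachable residue is $0$, for $p=3$ the reachable residues are $\pm1$, and $p=1$ is never solvable. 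Phrasing the criterion through the distance to the nearest multiple is exactly what keeps it correct for small $m$, e.g.\ $m=1$ with $p=3$, where the attainable interval is wider than $4m$.

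It then remains to match this arithmetic criterion to the definition of $turn_m$. Here I would observe that the branch ``both $i,j$ odd $\Rightarrow i+j+k\in\{-1,0,1\}$'' versus ``otherwise $\Rightarrow i+j+k\in\{0\}$'' produces, once intersected with the parity constraint on $i+j+k$, exactly the reachable-residue sets found above; in particular the seemingly asymmetric definition is in fact symmetric in $i,j,k$, which is reassuring since the existence statement manifestly is. This completes Part~1.

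For Part~2 I would again argue in rescaled coordinates and, by the symmetry just noted, assume without loss of generality that the prescribed angle is $\gamma=c\theta\in[k]_m$; I must then produce $a\in\overline{[i]}$, $b\in\overline{[j]}$ with $a+b\equiv -c\pmod{4m}$. The attainable $a+b$ is the single residue $\{i+j\}$ when $i,j$ are both even (a fully degenerate case where nothing is left to choose), and otherwise an open interval centred at $i+j$ of half-width $q$, where $q$ is the number of odd indices among $i,j$. Writing $c=k+(c-k)$ with $c-k\in\,]{-1},1[$ (or $c-k=0$ if $k$ is even) and substituting the residue of $i+j+k$ supplied by Part~1, the target for $a+b$ sits at offset $-(c-k)$ minus that residue from the centre, and I claim this offset stays strictly below $q$, so the target lies strictly inside the attainable open interval. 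Verifying this claim is the \textbf{main obstacle}: it is an endpoint analysis, since the tight patterns are precisely those in which the prescribed index is odd and $c$ is pushed to the very edge of its interval, so that $|c-k|\to 1$ (the all-odd case $p=3$ being the sharpest). What saves the argument is exactly the openness of the intervals together with width bookkeeping: dropping the prescribed coordinate removes at most one unit of half-width, but the prescribed coordinate's own deviation is itself strictly below one unit, so the deficit is strictly absorbed and the target never reaches the excluded endpoint. Checking the few parity patterns then finishes Part~2.
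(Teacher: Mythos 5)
Your proposal is correct, but it takes a genuinely different route from the paper. The paper proves Part~1 by a case distinction on the parities of $i$ and $j$ (four cases, three distinct): in each case it computes, from $\alpha\in[i]_m$ and $\beta\in[j]_m$, the set of admissible third angles $\gamma$ as a union of classes $[-i-j-1]_m\cup[-i-j]_m\cup[-i-j+1]_m$ (or just $[-i-j]_m$), and then reads off the admissible $k$; Part~2 is dismissed with one sentence (``straightforward when inspecting the proof above''). You instead rescale by $\pi/2m$, work with integer-plus-deviation representatives modulo $4m$, and characterize solvability by whether some multiple of $4m$ lies within distance $p$ (the number of odd indices) of $i+j+k$, intersected with the parity constraint. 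This is the same underlying computation, but organized symmetrically in $i,j,k$: a nice by-product is that it makes explicit both the hidden symmetry of the seemingly asymmetric definition of $turn_m$ and the modular reading of $\alpha+\beta+\gamma=0$, which the paper leaves implicit in its normalization convention (your $turn_1(1,1,2)$ example is a valid witness that the strict real-number reading would be wrong). Your treatment of Part~2 is also more substantial than the paper's. One caution there: your ``width bookkeeping'' sentence, taken literally, only yields that the target offset $|r+u|$ is below $q+1$ (from $|r|\le q$ and $|u|<1$); what actually closes the gap to the required strict bound $|r+u|<q$ is the parity constraint, which forces the residue $r$ of $i+j+k$ to satisfy $|r|\le q-1$ whenever the prescribed index is odd (e.g.\ $r=0$ when $p=2$, $|r|=1$ when $p=3$). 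You do flag that ``checking the few parity patterns finishes Part~2,'' and indeed it does, so the argument is complete once that check is written out; just be aware that the parity step is what carries the load, not the triangle inequality alone.
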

\begin{proof}
We prove the first statement by a case distinction.
Case 1: both $i$ and $j$ are even.
This means that $[i]_m=\{ 2 \pi \frac{i}{4 m}\}$ and 
$[j]_m=\{ 2 \pi \frac{j}{4 m}\}$. Hence,
\[
\begin{array}{ll}
& \exists \alpha\in[i]_m, \beta\in[j]_m, \gamma\in[k]_m\,. \, \alpha+\beta+\gamma=0\\
\mbox{iff} &
\exists \gamma\in[k]_m\,.\,  2 \pi \frac{i+j}{4 m}+\gamma=0\\
\mbox{iff} &
i+j+k=0\\
\mbox{iff} &
turn_m(i,j,k)
\end{array}
\]

Case 2: $i$ is odd and $j$ is even.
This means that $[i]_m=] 2 \pi \frac{i - 1}{4 m}, 2 \pi \frac{i+1}{4 m}[$ and 
$[j]_m=\{ 2 \pi \frac{j}{4 m}\}$. Hence,
\[
\begin{array}{ll}
& \exists \alpha\in[i]_m, \beta\in[j]_m, \gamma\in[k]_m\,. \, \alpha+\beta+\gamma=0\\
\mbox{iff} &
\exists \gamma\in[k]_m\,.\,  -2 \pi \frac{j}{4 m}-\gamma\in] 2 \pi \frac{i - 1}{4 m}, 2 \pi \frac{i+1}{4 m}[ \\
\mbox{iff} &
\exists \gamma\in[k]_m\,.\,  \gamma\in] 2 \pi \frac{-i-j - 1}{4 m}, 2 \pi \frac{-i-j+1}{4 m}[ \\
\mbox{iff} &
\exists \gamma\in[k]_m\,.\,  \gamma\in[-i-j]_m \\
\mbox{iff} &
k=-i-j\\
\mbox{iff} &
turn_m(i,j,k)
\end{array}
\]

Case 3: $i$ is even and $j$ is odd: analogous to case 2.

Case 4: both $i$ and $j$ are odd.
This means that $[i]_m=] 2 \pi \frac{i - 1}{4 m}, 2 \pi \frac{i+1}{4 m}[$ and 
$[j]_m=] 2 \pi \frac{j - 1}{4 m}, 2 \pi \frac{j+1}{4 m}[$ .
Hence,
\[
\begin{array}{ll}
& \exists \alpha\in[i]_m, \beta\in[j]_m, \gamma\in[k]_m\,. \, \alpha+\beta+\gamma=0\\
\mbox{iff} &
\exists \gamma\in[k]_m\,.\,  \gamma\in] 2 \pi \frac{-i-j - 2}{4 m}, 2 \pi \frac{-i-j+2}{4 m}[ \\
\mbox{iff} &
\exists \gamma\in[k]_m\,.\,  \gamma\in [-i-j-1]_m \cup [-i-j]_m \cup [-i-j+1]_m  \\
\mbox{iff} &
k\in\{-i-j-1,-i-j,-i-j+1\}\\
\mbox{iff} &
turn_m(i,j,k)
\end{array}
\]

The second statement is straightforward when inspecting the proof above.
\end{proof}

Next, we turn to triangles. In a triangle, the sum of angles is always
$\pi$. Moreover, all angles have the same sign. We include the
degenerate case where two angles are 0 and the remaining one is $\pi$
(this corresponds to three points on a line), but we exclude the case
of three angles being $\pi$ (this is not geometrically realizable).
This leads to the following definitions:
\[
\begin{array}{l}
sign_m(i) = \left\{ 
\begin{array}{ll}
  0, & \mbox{if } (i\mod 4m=0) \vee (i\mod 4m=2m)\\ 
  1, & \mbox{if } i\mod 4m<2m\\
  -1, & \mbox{otherwise}
\end{array}
\right. \\[4ex]
triangle_m(i,j,k) \mbox{ iff } \\
\qquad  turn_m(i,j,k-2m) \wedge 
  (i,j,k) \not= (2m,2m,2m) \wedge
  sign_m(i)=sign_m(j)=sign_m(k) \\[2ex]
\end{array}
\]
Here, the angle $\pi$ also has sign $0$, which corresponds to the
geometric intuition and to the fact that the choice between $-\pi$
and $\pi$ to represent this angle is rather arbitrary.

From the above discussion, it is then straightforward to see
\begin{proposition}\label{triangle}
\[
triangle_m(i,j,k) \mbox{ iff }\exists \alpha\in[i]_m, \beta\in[j]_m, \gamma\in[k]_m\,. \,\mbox{ there exists a triangle with angles }\alpha, \beta, \gamma 
\]
\end{proposition}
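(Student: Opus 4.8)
The plan is to reduce the statement to Proposition~\ref{turn} through a single observation about shifting an index by $2m$, and then to match the resulting arithmetic condition against the elementary geometric characterization of the interior angles of a (possibly degenerate) triangle.

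First I would record the effect of the shift $k\mapsto k-2m$. Since $2\pi\frac{2m}{4m}=\pi$, subtracting $2m$ from an index moves the represented angle by $\pi$ and preserves its parity, so that $[k-2m]_m$ is exactly $[k]_m$ translated by $-\pi$ after normalization to $]-\pi,\pi]$; concretely, $\gamma'\in[k-2m]_m$ iff $\gamma'+\pi\in[k]_m$. Applying Proposition~\ref{turn} to the triple $(i,j,k-2m)$ and substituting $\gamma=\gamma'+\pi$ then shows that $turn_m(i,j,k-2m)$ holds iff there are $\alpha\in[i]_m$, $\beta\in[j]_m$, $\gamma\in[k]_m$ whose sum is $\pi$ modulo $2\pi$. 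Hence the $turn$ conjunct in the definition of $triangle_m$ encodes precisely the angle-sum condition.

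Next I would invoke the elementary fact that a possibly degenerate triangle with signed interior angles $\alpha,\beta,\gamma\in\,]-\pi,\pi]$ exists iff the three angles share a common orientation and add up to the matching straight angle: either all lie in $]0,\pi[$ and sum to $\pi$ (counter-clockwise traversal), or all lie in $]-\pi,0[$ and sum to $-\pi$ (clockwise traversal), or the triple is a permutation of $(0,0,\pi)$ (three collinear points). Reading off $sign_m$, these three families are exactly the cases in which $sign_m$ takes the common value $+1$, $-1$, or $0$ on all three indices, which is the reason the definition demands $sign_m(i)=sign_m(j)=sign_m(k)$.

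Finally I would combine the two ingredients. Under the common-sign hypothesis the congruence ``sum $\equiv\pi$ modulo $2\pi$'' pins the sum down exactly: three angles in $]0,\pi[$ lie in $]0,3\pi[$ and must therefore sum to $\pi$; the negative class is forced to sum to $-\pi$; and the zero class, where each angle is $0$ or $\pi$, is pinned to the permutations of $(0,0,\pi)$, automatically excluding $(0,0,0)$ and $(0,\pi,\pi)$ and leaving only the spurious all-$\pi$ triple, which is removed by the explicit conjunct $(i,j,k)\neq(2m,2m,2m)$. Each surviving configuration is a realizable (possibly degenerate) triangle, and conversely every such triangle falls into one of the three families, yielding both directions of the equivalence. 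I expect the main obstacle to be the bookkeeping at the degenerate boundary where an angle equals $0$ or $\pi$: one has to verify that ``equal $sign_m$ together with sum $\equiv\pi$ modulo $2\pi$'' collapses to exactly the realizable families, and that $(2m,2m,2m)$ is the unique arithmetic solution with no geometric realization.
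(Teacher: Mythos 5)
Your proof is correct and takes essentially the same route as the paper, whose own ``proof'' merely asserts that the proposition is straightforward from its preceding discussion (triangle angles sum to $\pi$, share a common sign, with the degenerate $(0,0,\pi)$ case allowed and $(\pi,\pi,\pi)$ excluded). Your writeup supplies exactly the details the paper leaves implicit: the reduction of $turn_m(i,j,k-2m)$ to the condition ``$\alpha+\beta+\gamma \equiv \pi \pmod{2\pi}$'' via the shift $[k-2m]_m = [k]_m - \pi$, the observation that each interval $[i]_m$ lies entirely in one sign class so that angle signs and index signs agree, and the check that the common-sign hypothesis pins the sum down to $\pi$, $-\pi$, or the collinear case with $(2m,2m,2m)$ as the unique non-realizable leftover.
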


Algorithm~\ref{OPRAcomp} now gives the complete algorithm for
computing $\OPRA_m$ compositions.  Note that we have slightly
rephrased the definition of $turn_m(i,j,k)$, the new version already
taking care of our convention regarding the cyclic group ${\cal
  Z}_{4m}$ and thus being directly implementable as a computer program
using the usual integers instead of ${\cal Z}_{4m}$.

\begin{algorithm}
~\\
$\begin{array}{l}
turn_m(i,j,k) \mbox{ iff }  |(i+j+k+2m)\mod 4m)-2m| \leq (i\mod 2) \times (j\mod 2)\\[2ex]
sign_m(i) = \left\{ 
\begin{array}{ll}
  0, & \mbox{if } (i\mod 4m=0) \vee (i\mod 4m=2m)\\ 
  1, & \mbox{if } i\mod 4m<2m\\
  -1, & \mbox{otherwise}
\end{array}
\right. \\[4ex]
triangle_m(i,j,k) \mbox{ iff } \\
\qquad  turn_m(i,j,k-2m) \wedge 
  (i,j,k) \not= (2m,2m,2m) \wedge
  sign_m(i)=sign_m(j)=sign_m(k) \\[2ex]
opra(\opras{m}{i},\opras{m}{k},\opras{m}{s})\mbox{ iff } turn_m(i,k,-s)\\
opra(\opras{m}{i},\opras{m}{k},\opra{m}{s}{t})\mbox{ iff } \false\\
opra(\opras{m}{i},\opra{m}{k}{l},\opras{m}{s})\mbox{ iff } \false\\
opra(\opras{m}{i},\opra{m}{k}{l},\opra{m}{s}{t})\mbox{ iff } l=t \wedge turn_m(i,k,-s)\\
opra(\opra{m}{i}{j},\opras{m}{k},\opras{m}{s})\mbox{ iff } \false\\
opra(\opra{m}{i}{j},\opras{m}{k},\opra{m}{s}{t})\mbox{ iff } i=s \wedge turn_m(t,k,-j)\\
opra(\opra{m}{i}{j},\opra{m}{k}{l},\opras{m}{s})\mbox{ iff } j=k \wedge turn_m(i,-l,-s)\\
opra(\opra{m}{i}{j},\opra{m}{k}{l},\opra{m}{s}{t})\mbox{ iff } \\
~~~~\exists 0\leq u,v,w< 4m\, .\,  turn_m(u,-i,s) \wedge turn_m(v,-k,j) \wedge turn_m(w,-t,l)  \wedge triangle_m(u,v,w) \\
\end{array}$
\caption{Checking entries of the $\OPRA_m$ composition table}
\label{OPRAcomp}
\end{algorithm}

\begin{theorem}
Algorithm~\ref{OPRAcomp} computes composition in $\OPRA_m$.
\end{theorem}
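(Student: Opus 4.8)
The plan is to read $opra(r_1,r_2,r_3)$ as the composition test $r_3\in r_1\diamond r_2$, i.e.\ as a test for the existence of o-points with $A\,r_1\,B$, $B\,r_2\,C$ and $A\,r_3\,C$, and to show in each of the eight same/different cases that the algorithm's condition is equivalent to this realizability. The first step is to fix notation: writing $\iota=\varphi_{AB}-\phi_A$, $\jmath=\varphi_{BA}-\phi_B$, $\kappa=\varphi_{BC}-\phi_B$, $\lambda=\varphi_{CB}-\phi_C$, $\sigma=\varphi_{AC}-\phi_A$, $\tau=\varphi_{CA}-\phi_C$, the six defining conditions become $\iota\in[i]_m,\dots,\tau\in[t]_m$, while geometry enters only through $\varphi_{BA}=\varphi_{AB}+\pi$, $\varphi_{CB}=\varphi_{BC}+\pi$, $\varphi_{CA}=\varphi_{AC}+\pi$ together with the requirement that $A,B,C$ be actual points. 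The structural observation driving everything is that, once the positions (hence all six $\varphi$'s) are fixed, the three orientations decouple: $\phi_A$ is constrained only by the $i$- and $s$-conditions, $\phi_B$ only by the $j$- and $k$-conditions, and $\phi_C$ only by the $l$- and $t$-conditions, so they may be chosen independently.

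Next I would dispose of the seven comparatively easy cases. Since coincidence of positions is transitive, the patterns $(s,s,d)$, $(s,d,s)$, $(d,s,s)$ are immediately unrealizable, matching the three $\false$ entries. In the partially collapsed cases $(s,d,d)$, $(d,s,d)$, $(d,d,s)$ one pair of positions coincides, which makes two of the $\varphi$'s equal; because distinct index classes $[p]_m$ are pairwise disjoint, this forces exactly the equalities $l=t$, $i=s$, $j=k$ respectively, and the remaining freedom collapses to a single equation of the form $\alpha-\beta+\gamma=0$ with summands ranging over the relevant classes, which by Proposition~\ref{turn} is precisely the associated predicate $turn_m(i,k,-s)$, $turn_m(t,k,-j)$, $turn_m(i,-l,-s)$; the fully collapsed case $(s,s,s)$ reduces directly to $turn_m(i,k,-s)$. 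Each verification is a direct substitution.

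The crux is the all-different case. Here I introduce the signed vertex angles $\mu=\varphi_{AB}-\varphi_{AC}=\iota-\sigma$, $\nu=\varphi_{BC}-\varphi_{BA}=\kappa-\jmath$, $\omega=\varphi_{CA}-\varphi_{CB}=\tau-\lambda$ and let $u,v,w$ be their (unique) classes. For the forward direction a realization yields a genuine, possibly collinear, triangle $ABC$ whose signed vertex angles are $\mu,\nu,\omega$; that they form a triangle is exactly $triangle_m(u,v,w)$ by Proposition~\ref{triangle} (their sum being $\pm\pi$, reconciled modulo $2\pi$, with the impossible all-$\pi$ case excluded as $(2m,2m,2m)$), while simultaneous solvability of the two conditions on $\phi_A$ is precisely $turn_m(u,-i,s)$, and likewise $turn_m(v,-k,j)$, $turn_m(w,-t,l)$ for $\phi_B,\phi_C$. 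For the backward direction, given witnesses $u,v,w$, Proposition~\ref{triangle} supplies concrete $\mu\in[u]_m$, $\nu\in[v]_m$, $\omega\in[w]_m$ spanning an actual triangle, which I realize as positions $A,B,C$; then, with $\mu$ now fixed by the triangle, the second part of Proposition~\ref{turn} lets me pick $\iota,\sigma$ in their classes with $\mu=\iota-\sigma$ and set $\phi_A:=\varphi_{AB}-\iota$, automatically satisfying both $A$-conditions, and analogously at $B$ and $C$.

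I expect the main obstacle to be exactly this backward reconstruction: one must ensure that a single chosen triangle simultaneously supports all three independent orientation choices, and this is why the strengthening in Proposition~\ref{turn}(2) — that any one of the three angles may be prescribed — is indispensable, since the value $\mu$ forced by the triangle must still be extendable to admissible $\iota,\sigma$. Closely tied to this are the degenerate and boundary subtleties that the proof must treat uniformly: collinear but distinct triples, where two vertex angles vanish and the third is $\pi$ and the pertinent classes become the even point-classes; the $\pm\pi$ normalization ambiguity, which makes the two triangle orientations carry opposite overall sign yet both satisfy $triangle_m$; and the check that the constructed points are genuinely pairwise distinct so that the different-position relations are well defined. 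Once these are handled through Propositions~\ref{turn} and~\ref{triangle}, the eight cases together show that $opra$ decides realizability, which is the claim. \hspace*{\fill}$\Box$
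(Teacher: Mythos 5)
Your proposal is correct and takes essentially the same route as the paper's proof: the same eight-case analysis, the same reduction of the coincident-point cases to $turn_m$ via Proposition~\ref{turn} (including the forced equalities $l=t$, $i=s$, $j=k$ from disjointness of the classes), and in the all-different case the same passage to the triangle's vertex angles, Proposition~\ref{triangle}, the three corner turns, and the use of Proposition~\ref{turn}(2) to pin the triangle angle in the backward reconstruction. The only difference is presentational: your explicit ``orientations decouple once positions are fixed'' observation is used implicitly rather than stated in the paper.
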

\begin{proof}

Case 
$opra(\opras{m}{i},\opras{m}{k},\opras{m}{s})$.
Since the points of all o-points are the same, their direction
must add up to a complete turn.
More precisely, the configuration $\opras{m}{i},\opras{m}{k},\opras{m}{s}$
is realizable iff there are o-points $A$, $B$ and $C$ with
${\bf p}_A = {\bf p}_B = {\bf p}_C$, 
$\phi_B-\phi_A\in[i]_m$, 
$\phi_C-\phi_B\in[k]_m$, and
$\phi_C-\phi_A\in[s]_m$. Since for such $A$, $B$ and $C$,
$(\phi_B-\phi_A)+(\phi_C-\phi_B)-(\phi_C-\phi_A)=0$ (i.e.\ we have 
a complete turn), by Proposition~\ref{turn} this is in turn equivalent
to  $turn_m(i,k,-s)$.
\ednote{draw picture. RMO: if space restrictions permit. TM: this should
be there.}
\begin{figure}[htb]
\begin{center}
\includegraphics{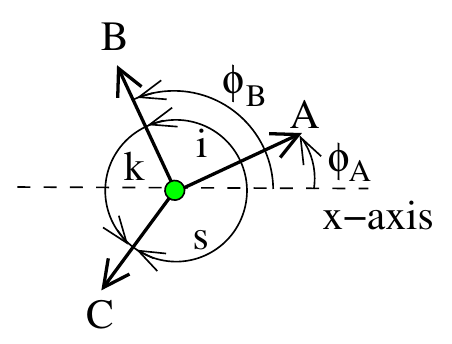}
\caption{\label{fig:turnIKS}
${\bf p}_A = {\bf p}_B = {\bf p}_C$
}
\end{center}
\end{figure}

Cases 
$opra(\opras{m}{i},\opras{m}{k},\opra{m}{s}{t})$, $opra(\opras{m}{i},\opra{m}{k}{l},\opras{m}{s})$ and
$opra(\opra{m}{i}{j},\opras{m}{k},\opras{m}{s})$.
Since sameness of points is transitive, these cases are not
realizable.

Cases $opra(\opras{m}{i},\opra{m}{k}{l},\opra{m}{s}{t})$,
$opra(\opra{m}{i}{j},\opras{m}{k},\opra{m}{s}{t})$ and
$opra(\opra{m}{i}{j},\opra{m}{k}{l},\opras{m}{s})$.  We here only
treat the case $opra(\opras{m}{i},\opra{m}{k}{l},\opra{m}{s}{t})$;
the other cases being analoguous.
The configuration $A \opras{m}{i} B, B\opra{m}{k}{l}C, A\opra{m}{s}{t}C$
is realizable iff 
\[
\left.
\begin{array}{l}
\mbox{there are o-points $A$, $B$ and $C$ with}\\
~~~{\bf p}_A = {\bf p}_B, 
\phi_B-\phi_A\in[i]_m,\\
~~~\phi_{BC}-\phi_B\in[k]_m, \phi_{CB}-\phi_C\in[l]_m,\\
~~~\phi_{AC}-\phi_A\in[s]_m\mbox{ and }
\phi_{CA}-\phi_C\in[t]_m.
\end{array}
\right\}(*)
\]
We now show that $(*)$ is equivalent to 
\[l=t\mbox{ and }turn_m(i,k,-s).\]
Assume $(*)$. By ${\bf p}_A = {\bf p}_B$, we have $\phi_{BC}=\phi_{AC}$
and $\phi_{CB}=\phi_{CA}$; from the latter, we also get $l=t$.
Moreover, $(\phi_B-\phi_A)+(\phi_{BC}-\phi_B)-(\phi_{AC}-\phi_A)=0$ is
a turn, and by Proposition~\ref{turn}, we get $turn_m(i,k,-s)$.
Conversely, assume $l=t$ and $turn_m(i,k,-s)$.
By Proposition~\ref{turn}, there are angles $\alpha,\beta,\gamma$ with
$\alpha\in[i]_m$, $\beta\in[k]_m$ and $\gamma\in[-s]_m$.
Choose $A$ arbitrarily. Then define $B$ by
${\bf p}_B = {\bf p}_A$ and $\phi_B=\alpha-\phi_A$.
Then choose ${\bf p}_C$ on the half-line starting from ${\bf p}_A$
and having angle $\beta$ to $B$ and $-\gamma$ to $A$.
Finally, choose $\phi_C$ such that
$\phi_{CA}-\phi_C=\phi_{CB}-\phi_C\in[t]_m=[l]_m$. This
ensures the conditions of $(*)$.
\ednote{draw picture. RMO: if space restrictions permit. TM: this picture
should be there.}
\begin{figure}[htb]
\begin{center}
\includegraphics{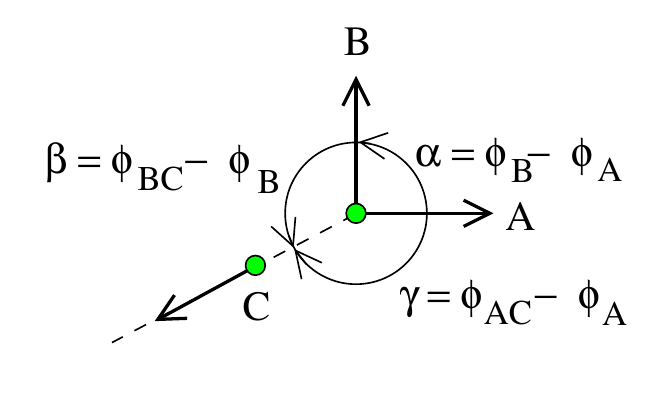}
\caption{\label{fig:phiC}
${\bf p}_A = {\bf p}_B \neq {\bf p}_C$
}
\end{center}
\end{figure}

Case 
$opra(\opra{m}{i}{j},\opra{m}{k}{l},\opra{m}{s}{t})$.
We need to show that the existence of a configuration 
$A\opra{m}{i}{j}B$, $B\opra{m}{k}{l}C$ and
$A\opra{m}{s}{t}C$ is equivalent to
\[
\left.
\begin{array}{l}
\exists 0\leq u,v,w< 4m . \\
~~ turn_m(u,-i,s) \wedge turn_m(v,-k,j) \wedge turn_m(w,-t,l)\\
~~  \wedge triangle_m(u,v,w) 
\end{array}
\right\}(**)
\]
Given $A\opra{m}{i}{j}B$, $B\opra{m}{k}{l}C$ and
$A\opra{m}{s}{t}C$,
let $\alpha$, $\beta$ and $\gamma$ be the angles
of the triangle ${\bf p}_A{\bf p}_B{\bf p}_C$,
that is,
\[\begin{array}{l}
\alpha=\phi_{AB}-\phi_{AC}\\
\beta=\phi_{BC}-\phi_{BA}\\
\gamma=\phi_{CA}-\phi_{CB}
\end{array}
\]
\begin{figure}[!ht]
\vspace{-0.4cm}\begin{center}
\includegraphics[width=5.0cm]{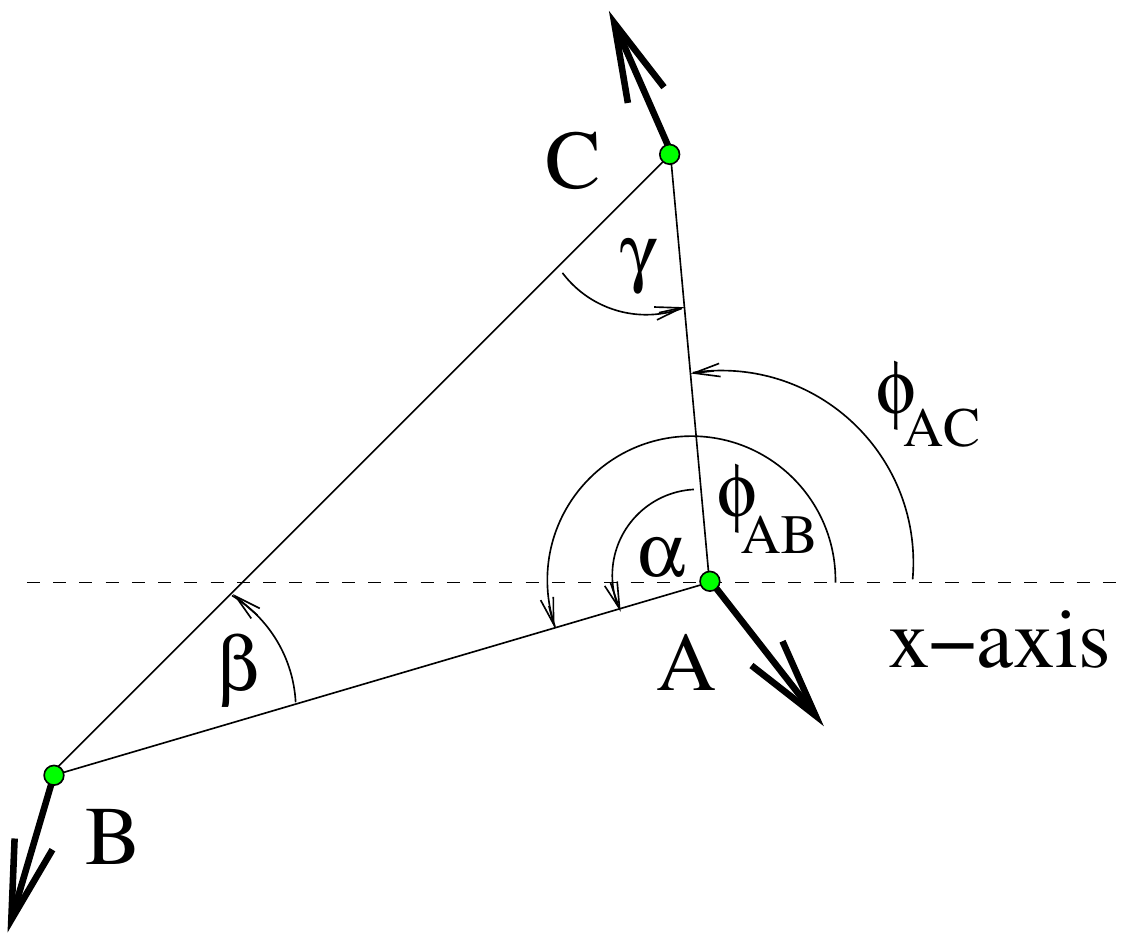}
\vspace{-0.2cm}\caption{The sum of angles in a triangle equals $\pi$} 
\label{fig:triangle} 
\end{center} 
\end{figure}
\ednote{adapt triangle to the present notation. RMO: if space restrictions permit,
otherwise drop the picture.}

Let $u, v, w\in {\cal Z}_{4m}$ be such that $\alpha\in[u]_m$,
$\beta\in[v]_m$ and $\gamma\in[w]_m$.  By Proposition~\ref{triangle},
$triangle_m(u,v,w)$.  At the corners of the triangle 
${\bf p}_A{\bf p}_B{\bf p}_C$, the following complete turns can be formed:
\begin{itemize}
\item $(\phi_{AB}-\phi_{AC})-(\phi_{AB}-\phi_A)+(\phi_{AC}-\phi_A)$, 
corresponding to $turn_m(u,-i,s)$ by Proposition~\ref{turn},
\item $(\phi_{BC}-\phi_{BA})-(\phi_{BC}-\phi_B)+(\phi_{BA}-\phi_B)$, 
corresponding to $turn_m(v,-k,j)$,
\item $(\phi_{CA}-\phi_{CB})-(\phi_{CA}-\phi_C)+(\phi_{CB}-\phi_C)$, 
corresponding to $turn_m(w,-t,l)$.
\end{itemize}
This shows $(**)$. Conversely, assume $(**)$.
By $triangle_m(u,v,w)$ and Proposition~\ref{triangle},
we can choose ${\bf p}_A$, ${\bf p}_B$ and ${\bf p}_C$ such that
\[\begin{array}{l}
\phi_{AB}-\phi_{AC}\in[u]_m\\
\phi_{BC}-\phi_{BA}\in[v]_m\\
\phi_{CA}-\phi_{CB}\in[w]_m
\end{array}
\]
Since $turn_m(u,-i,s)$, by Proposition~\ref{turn}, we can find $\alpha_A$, $\beta_A$ and $\gamma_A$
such that $\alpha_A+\beta_A+\gamma_A=0$ and 
$\alpha_A\in[-i]_m$, $\beta_A\in[s]_m$ and $\gamma_A\in[u]_m$.
From Proposition~\ref{turn}(2), we obtain that it is
possible to choose $\gamma_A=\phi_{AB}-\phi_{AC}$ (note that the latter
angle is also in $[u]_m$). 
Put $\phi_A:=\phi_{AB}+\alpha_A$, then $\phi_{AB}-\phi_A=-\alpha_A\in[i]_m$,
and $\phi_{AC}-\phi_A=(\phi_{AB}-\phi_A)-(\phi_{AB}-\phi_{AC})=-\alpha_A-\gamma_A=\beta_A\in[s]_m$. $\phi_B$ and $\phi_C$ can be chosen similarly,
fulfilling the constraints given by $j$ and $k$ resp.\ $l$ and $t$.
\end{proof}

Using Algorithm~\ref{OPRAcomp}, a composition table for $\OPRA_{m}$
can be computed by enumerating all possible triples and only keeping
those for which the predicate $opra$ holds.  Moreover, given a pair of
$\OPRA_{m}$ relations, by enumerating all possible third $\OPRA_{m}$
relations and testing with the predicate $opra$, also the composition
of two relations can be computed.  

The run time of the predicate $opra$ is $O(m^3)$, since the algorithm
contains an existential quantification over the variables $u$, $v$,
$w$ ranging from $0$ to $4m-1$.  However, the existential
quantification can be replaced by a constant number of case
distinctions: e.g.\ we look for $u$ such that $turn_m(u,-i,s)$. But since
$u-i+s$ must add up to $-1$, $0$ or $1$, it is clear that $u$ must be
taken from the set $\{i-s-1,i-s,i-s+1\}$.  As a result, we get an
improved run time that is constant. This holds only when assuming a
  register machine with arithmetic operations executed in constant
  time. For a Turing machine with binary representations of numbers,
  basic arithmetic operations take time $\log m$. Then the run time is
  $O(\log m)$.

The computation of the composition of two relations needs to enumerate
all possible third relations, and then check each triple in constant
time.  Since there are $(4m)^2+4m$ relations, this takes $O(m^2)$
time, which is the same time as in \cite{FrommbergerEtAl07}.
Again, for Turing machines, this multiplies by a factor of $\log n$,
  hence we get an overall running time of $O(m^2\log m)$. Of course,
  the same remark applies to the algorithm of
  \cite{FrommbergerEtAl07}.

  A Haskell version of the $opra$ algorithm (also implementing the
  above optimization of the existential quantification) can be
  downloaded at \url{http://quail.rsise.anu.edu.au}.\ednote{fill in url. Should we
host the code on Google code, because this is more stable than
a URL involving our home pages? Aha, even better: the quail wiki http://quail.rsise.anu.edu.au}

The $\OPRA$ calculus can be used to express many other
qualitative position calculi \cite{cosy:dylla:2006:OPRA_GeneralFramework}.

\section{Applications of the $\OPRA$ calculus}

Spatial knowledge expressed in  
$\OPRA$
can be used for deductive reasoning based on constraint propagation (algebraic closure), resulting
in the generation of useful indirect knowledge from partial observations in a spatial scenario.
Several researchers developed applications using the $\OPRA$ calculus.
We will give a short overview and then make some concluding comments about
the first $\OPRA$ calculus applications in our conclusion section that follows.

\begin{figure}[htb]
\begin{center}
\centerline{
\includegraphics[height=4cm]{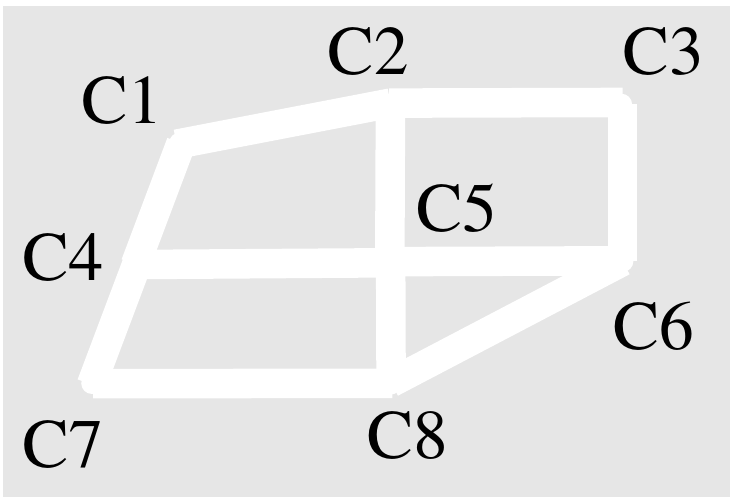}
\includegraphics[height=4cm]{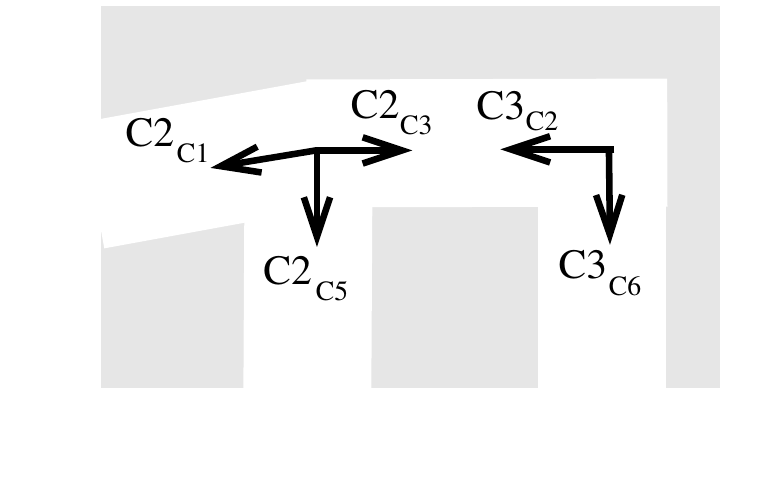}
}
\caption{\label{fig:streets_OPRA2}
Street networks with unique crossing names (detail with o-points to the right).
}
\end{center}
\end{figure}
\ednote{Relations would be nice here
RM: We have two relation examples from OPRA2
and four examples from OPRA4.
Which ones should be added? 
The whole street network would be too big and is very regular.
}

In a simple application by L{\"u}cke et. al. 
\cite{LueckeMoratzMossakowskiSFBreport2010} for benchmarking purposes between different spatial
calculi,
a spatial agent (a simulated robot, cognitive simulation of a
biological system etc.) explores a spatial scenario.
The agent collects local observations and wants to generate survey knowledge.
Fig.~\ref{fig:streets_OPRA2}
shows a spatial environment consisting of a street network. 
The notation $C2_{C1}$ refers to the o-point at position $C2$ with an intrinsic 
orientation towards
point $C1$. 
In this street network some streets continue straight after a crossing and
some streets meet with orthogonal angles. 
These features are typical of real-world street networks and can be directly represented
in $\OPRA_{2}$ expressions about o-points that constitute relative directions of
o-points located at crossings and pointing to neighboring (e.g. visible) crossings.
For example two relations corresponding to local observations referring to the
street network part depicted in Fig.~\ref{fig:streets_OPRA2} are:
$C2_{C3} \, {}_{\rm front}^{\rm front} \, C3_{C2}$
and 
$C2_{C3} \, {}_{\rm same}^{\rm right} \, C2_{C5}$.
Spatial reasoning in this spatial agent simulation 
uses constraint propagation (e.g. algebraic closure computation)
to derive indirect constraints between the relative location of streets which are
further apart from local observations between neighboring streets.
The resulting survey knowledge can be used for several tasks including navigation tasks.
The details of this scenario can be found in L{\"u}cke et. al. \cite{LueckeMoratzMossakowskiSFBreport2010}.

A related application developed by Wallgr{\"u}n \cite{wallgruenSCCaccepted}
uses Qualitative Spatial Reasoning
with $\OPRA$ to
determine the correct graph structure from a sequence of local observations by a simulated
robot collected while moving through an environment consisting of hallways.
These hallway networks are analogous to the street networks of L{\"u}cke et. al., but the
local observation are modeled in a more complex but more realistic way.
The identity of a crossing revisited after a cyclic path 
is not given but has to be inferred which makes navigation much more challenging.
Since there are many ambiguities left, the task is to track the multiple geometrically possible
topologies of the network during an incremental observation. 
Thus, the goal of Wallgr{\"u}n's qualitative mapping algorithm 
is to process the history of observations and 
determine all route graph hypotheses 
which can be considered valid explanations so far.
This consistency checking can be based on qualitative spatial reasoning about
positions. The local relative observation are modeled based on 
$\OPRA_{2}$ expressions about o-points 
in a similar way like in the street network
described above.

\begin{figure}[htb]
\begin{center}
\includegraphics[height=2.7cm]{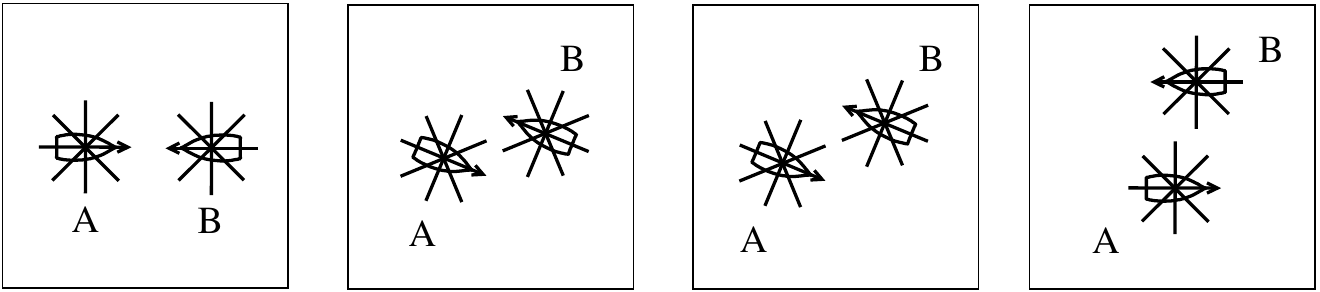}
\caption{\label{fig:SailAwayDiagramm}
Representation of vessel navigation with conceptual neighborhood in $\OPRA_{4}$
}
\end{center}
\end{figure}

A comprehensive simulation which uses the $\OPRA_{4}$ calculus for an important
subtask was built by Dylla et. al.
\cite{cosy:R3-R4-sailaway-aisb}.
Their system called SailAway simulates the behavior of different vessels following
declarative (written) navigation rules for collision avoidance. 
This system can be used to verify whether a given set of rules leads to stable
avoidance between potentially colliding vessels 
The different vessel categories that determine their right of way priorities are
represented in an ontology.
The movement of the vessels is described by a method called
conceptual neighborhood-based reasoning (CNH reasoning).
CNH reasoning describes whether two spatial configurations
of objects can be transformed into each other by small changes
\cite{cosyFRE91}, \cite{Galton00_QualSpatChange}.
A CNH transformation can be a object movement
in a short period of time.
Fig.~\ref{fig:SailAwayDiagramm} shows a CNH transition diagram which represents
relative trajectories of two rule following vessels.
The depicted sequence between two vessels $A$ and $B$ is:\\
$A \; {\scriptscriptstyle 4}\angle_{0}^{0} \; B \; \rightarrow
A \; {\scriptscriptstyle 4}\angle_{1}^{1} \; B \; \rightarrow
A \; {\scriptscriptstyle 4}\angle_{2}^{2} \; B \; \rightarrow
A \; {\scriptscriptstyle 4}\angle_{3}^{3} \; B\; $. 
Based on this
qualitative representation of trajectories, CNH 
reasoning is used as a simple, abstract model of the navigation
of the potentially colliding vessels in the SailAway simulator
\cite{cosy:R3-R4-sailaway-aisb})\footnote{An earlier version of qualitative 
navigation simulation by Dylla and Moratz can be found in \cite{DyllaMoratzSC04}}.

These three applications above 
make use of qualitative spatial
reasoning with\\ $\OPRA_{2}$ or $\OPRA_{4}$ 
in simulated spatial agent scenarios.
The granularity $m = 2$ can model straight continuation and right angles 
which are important for representing
idealized street networks.
The granularities $m = 2$ and $m = 4$ also correspond to earlier work about
computational models of linguistic projective expressions (left, right, in front, behind)
by Moratz et. al. \cite{MoratzFischerTenbrink} \cite{MoratzTenbrink2006}. 
The applications presented in this section could benefit from additional
qualitative relative distance knowledge. The TPCC calculus presented by Moratz \& Ragni
\cite{moratz05_QSR_RPP} is a first step towards this direction.
However, in contrast to our new results for the $\OPRA_m$ calculus 
the TPCC calculus only has a complex, manually derived and therefore
unreliable composition table. 

\vspace*{-0cm}\section{Summary and Conclusion}
We presented a calculus for representing and reasoning about
qualitative relative direction information.
Oriented points serve as the
basic entities since they are
the simplest spatial entities that
have an intrinsic orientation.
Sets of base relations can have adjustable granularity levels in this calculus.
We provided simple geometric rules
for computing the calculi's composition 
based on triples of oriented points.

We gave a short overview about three first applications 
that are based on oriented points and their relative position represented as $\OPRA_m$ relations
with granularity $m = 2$, or $m = 4$ which seem to be suited for
linguistically inspired spatial expressions. 

\section*{Acknowledgment} 
The authors would like to thank
Lutz Frommberger,
Frank Dylla,
Jochen Renz,
Diedrich Wolter,
Dominik L{\"u}cke,
and
Christian Freksa
for interesting and helpful
discussions related to the topic of the paper.
The work was supported by the DFG Transregional Collaborative
Research Center SFB/TR 8 {}``Spatial Cognition'' and by
the National Science Foundation under Grant No. CDI-1028895.


\bibliographystyle{plain}
\bibliography{KI05,oslsa}

\end{document}